\newtheorem{theorem}{Theorem}
\newtheorem{proposition}{Proposition}  
\title{Divergence-Based Similarity Function for Multi-View Contrastive Learning}
\author{
  Jae Hyoung Jeon\textsuperscript{1}, 
  Cheolsu Lim\textsuperscript{1}, 
  Myungjoo Kang\textsuperscript{1,2} \\
  \textsuperscript{1}Department of Mathematics, Seoul National University, Seoul, Korea \\
  \textsuperscript{2}Research Institute of Mathematics, Seoul National University, Seoul, Korea \\
  \texttt{\{jan4021, sky3alfory, mkang\}@snu.ac.kr}, \texttt{jaehyoung.jeon95@gmail.com}
}
\begin{document}

\maketitle

\begin{abstract}
Recent success in contrastive learning has sparked growing interest in more effectively leveraging multiple augmented views of data. While prior methods incorporate multiple views at the loss or feature level, they primarily capture pairwise relationships and fail to model the joint structure across all views. In this work, we propose a divergence-based similarity function (DSF) that explicitly captures the joint structure by representing each set of augmented views as a distribution and measuring similarity as the divergence between distributions. Extensive experiments demonstrate that DSF consistently improves performance across diverse tasks—including kNN classification, linear evaluation, transfer learning, and distribution shift—while also achieving greater efficiency than other multi-view methods. Furthermore, we establish a connection between DSF and cosine similarity, and demonstrate that, unlike cosine similarity, DSF operates effectively without the need for tuning a temperature hyperparameter. Code and pretrained models are available at \url{https://github.com/Jeon789/DSF}.
\end{abstract}

\section{Introduction}
\label{section:introduction}

Self-supervised learning (SSL) aims to learn meaningful representations without labels, often using contrastive learning (CL) frameworks such as SimCLR~\cite{chen2020simple}, MoCo~\cite{he2020momentum}, and DCL~\cite{yeh2022decoupled}. These methods encourage similarity between augmented views of the same data while separating different data, typically via cosine similarity on unit-normalized features.

However, most existing contrastive methods focus on pairwise relationships between two views. Recent efforts to leverage more than two views can be broadly categorized into two approaches:
\begin{itemize}
    \item \textbf{Loss-level approaches}: Averaging pairwise losses computed between all available combinations of views \cite{shidani2024poly}.
    \item \textbf{Feature-level approaches:} Aggregating features (e.g., via averaging~\cite{Fastsiam} or optimal transport~\cite{piran2024contrasting}) before applying the contrastive loss.
\end{itemize}

\begin{figure}[t]
    \centering
    \includegraphics[width=0.7\linewidth]{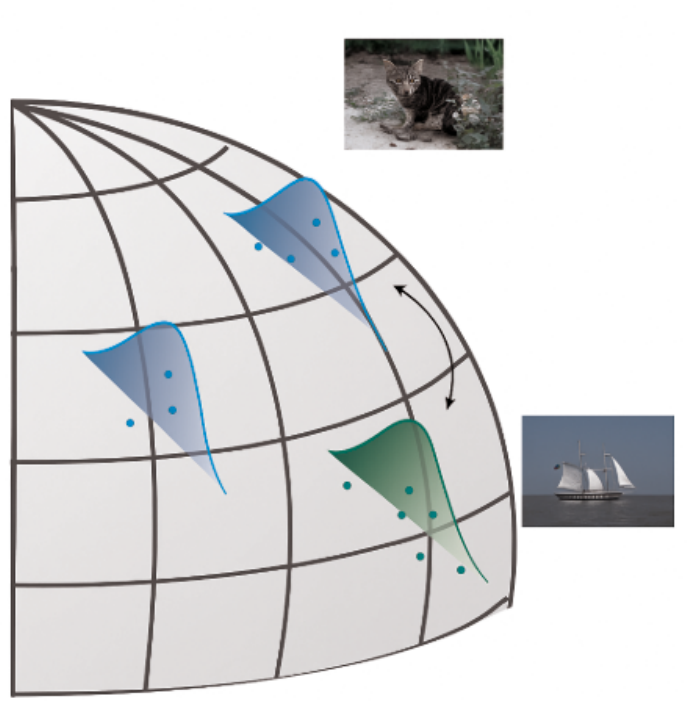}
    \caption{\textbf{Illustration of divergence-based similarity function}: given multiple augmented views of the image, their extracted feature vectors (colored dots) lie on the hypersphere and are used to construct probability distributions (colored regions) over the unit hypersphere.}
    \label{Fig:vMF_figure}
\end{figure}

\begin{figure*}[t!]
    \centering
    \includegraphics[width=\linewidth]{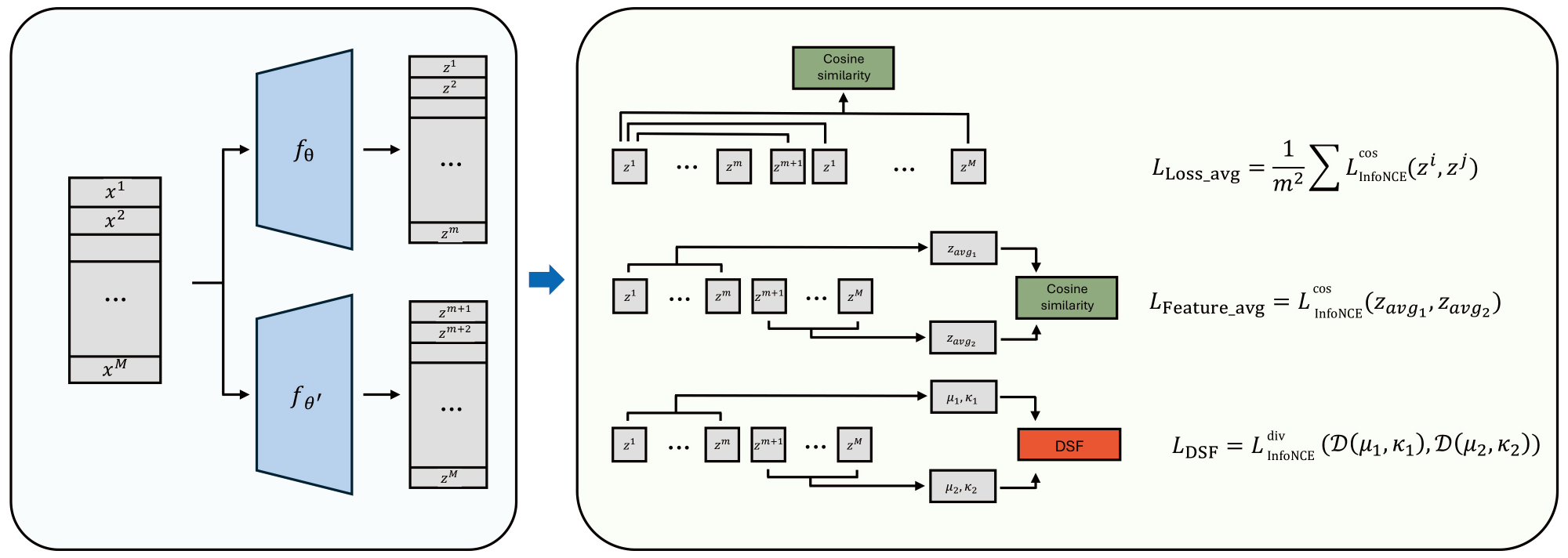}
    \caption{\textbf{Illustration of multi-view contrastive learning strategies.} Given an input $x$, we generate $M$ augmented views, resulting in $x^1, \cdots, x^M$. Base encoder outputs $M$ representations $z^1, \cdots, z^M$ from $M$ views. From top to bottom: (i) \textit{Loss\_avg}, which averages pairwise loss terms across $M$ views; (ii) \textit{Fea\_avg}, which computes a mean embedding over the $M$ views; and (iii) \textit{DSF}, our proposed method, which aggregates features from $M$ multiple views into distributions and computes a single contrastive loss between them.}
    \label{Fig:DSF_flow}
\end{figure*}

Although these multi-view extensions generalize two-view objectives, they still rely solely on pairwise relations and overlook the joint structural relationships among the full set of views.

To address this, we propose a novel similarity function that models the joint structure among all augmented views of data. We estimate a probability distribution on the unit hypersphere and compute similarity via negative divergence between distributions, see Figure \ref{Fig:vMF_figure}. We use the von Mises–Fisher (vMF) distribution, a widely used Gaussian-like distribution on the unit hypersphere \cite{mardia2009directional,fisher1993statistical,kitagawa2022mises,scott2021mises,du2024probabilistic}. vMF distribution is parameterized by mean direction $\mu$ and concentration $\kappa$, which respectively represent the central tendency and joint structure of features. Notably, in the two-view case, our method recovers cosine similarity in the InfoNCE loss~\cite{oord2018representation}, linking pairwise and multi-view contrastive frameworks.

Our main contributions are:
\begin{itemize}
    \item \textbf{Divergence-based multi-view Contrastive Learning.} We propose a Divergence-based Similarity Function (DSF) that captures joint structure across all views without decomposing into pairwise terms.
    \item \textbf{Establishing the relation to the original InfoNCE loss.} We show that, when restricted to two views under specific condition, our similarity recovers cosine similarity in InfoNCE loss, thus offering the unified view of pairwise and multi-view contrastive methods. 
    \item \textbf{Temperature-free.} We show the dependency of the temperature term in cosine similarity and show that, for DSF, temperature tuning is unnecessary.
    \item \textbf{Superior Experimental Results.} We validate the effectiveness of DSF through extensive experiments on kNN classification, linear evaluation, transfer learning, and distribution shift, as well as resource analysis in terms of GPU memory and training time, consistently outperforming both baseline and existing multi-view contrastive learning methods.
\end{itemize}

The remainder of this paper is structured as follows. Section~\ref{section:related_works} reviews related work on contrastive frameworks with both two-view ($M=2$) and multi-view ($M \ge 3$) settings. Section~\ref{section:prelim} provides an overview of CL and its multi-view extensions. Section~\ref{section:divergence-based_similarity_function} presents our proposed formulation. Section~\ref{section:experiments} describes the experimental setup and results. Finally, Section~\ref{section:conclusion} concludes the paper and discusses potential directions for future research.

\section{Related Works}
\label{section:related_works}

\begin{figure*}[!t]
    \centering
    \begin{subfigure}[t]{0.48\textwidth}
        \centering
        \includegraphics[width=\linewidth]{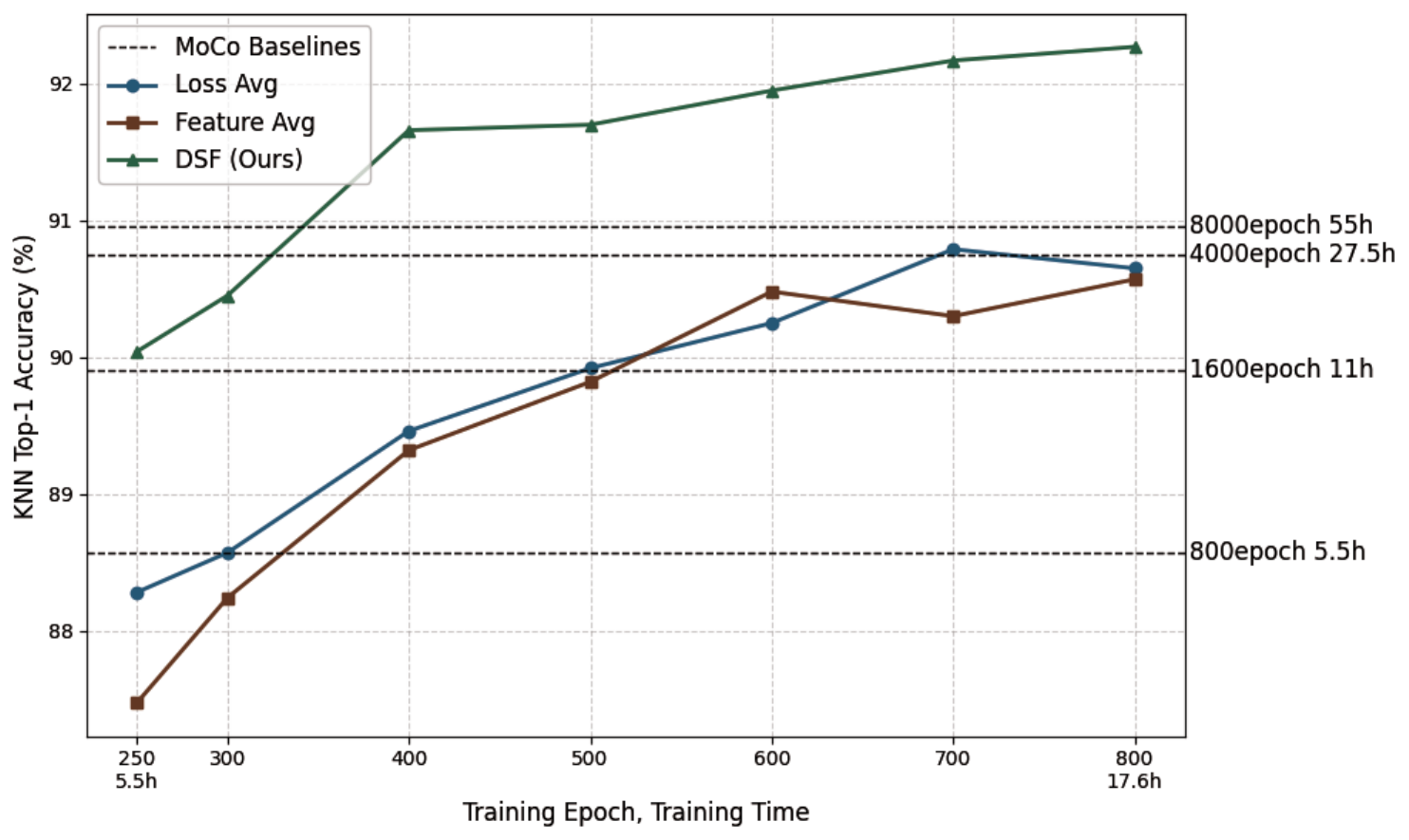}
        \caption{kNN Top-1 Accuracy with respect to training time.}
        \label{fig:knn_accuracy}
    \end{subfigure}
    \hfill
    \begin{subfigure}[t]{0.48\textwidth}
        \centering
        \includegraphics[width=\linewidth]{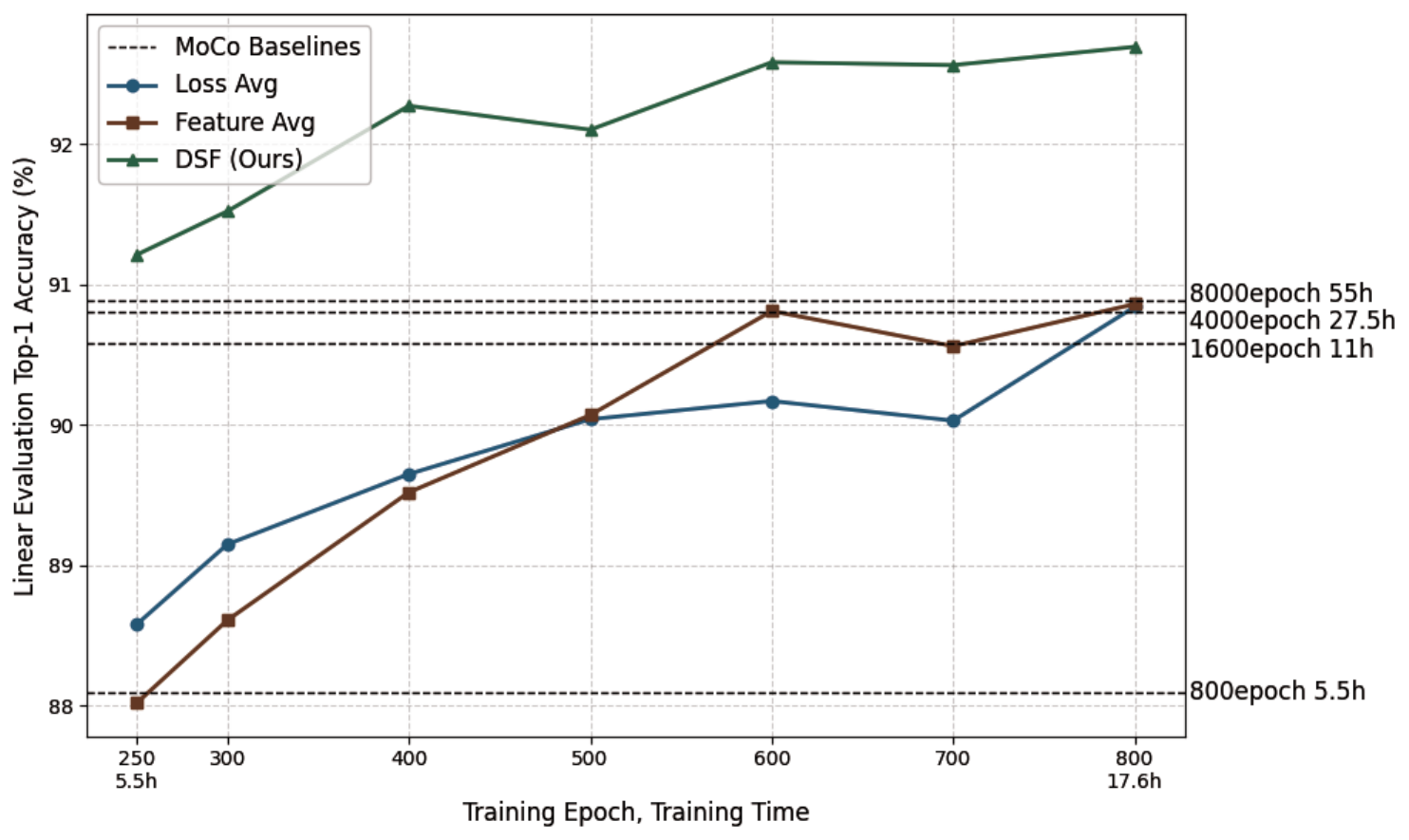}
        \caption{Linear evaluation accuracy with respect to training time.}
        \label{fig:linear_accuracy}
    \end{subfigure}
    \caption{\textbf{More Time Cost Comparison of performance over training time for kNN and linear evaluation.} 
        Both figures report accuracy for different training epochs and wall-clock time under the fixed GPU memory budget, with the number of views set to 8 ($M = 2m = 8$). 
        The right-hand side of each figure shows the elapsed time of MoCo baselines, while the $x$-axis indicates the time and epochs for the multi-view methods. 
        Note that the lowest dashed line (800 epochs, 5.5h) uses the same GPU memory and training time as the initial point (250 epochs, 5.5h) of each multi-view method. Compared to \textit{Loss\_avg} and \textit{Feature\_avg} at 800 epochs, DSF achieves about 2$\times$ faster convergence in the kNN setting and 3$\times$ faster convergence in the linear evaluation setting.}
    \label{fig:training_time_comparison}
\end{figure*}

\subsection{Original Contrastive Learning}

CL appears in many self-supervised learning methods. SimCLR~\cite{chen2020simple} leverages the InfoNCE loss~\cite{oord2018representation} to pull together representations of two augmented views while pushing apart all other samples. MoCo~\cite{he2020momentum} introduces a momentum encoder and a dynamic dictionary (negative queue) to maintain a large and consistent set of negative samples. DCL~\cite{yeh2022decoupled} decouples the positive and negative terms in the InfoNCE loss, removing negative-positive coupling to enable stable and high performance training with smaller batches and without a momentum encoder. MoCo v3~\cite{chen2021empirical} explores CL using a transformer-based model, specifically the Vision Transformer (ViT)~\cite{dosovitskiy2020image}.

\subsection{Contrastive Learning with $M\ge3$ views} 

Existing $M \ge 3$ multi‐view CL methods have broadly fallen into two categories. The first category shapes the loss by averaging pairwise terms across views. \cite{shidani2024poly} generates $M$ views per a sample and aggregates their pairwise losses either arithmetically or geometrically. The second category is a method that deals with view representations at the feature level before computing the final objective. FastSiam~\cite{Fastsiam} averages the other ($M-1$) view features to construct a target of each view, and M3G~\cite{piran2024contrasting} adopts a multi‐marginal optimal‐transport loss by measuring the gap between the true $M$-tuple transport cost and its optimal polymatching cost. In contrast to these approaches, DSF integrates multi-views into a single collective loss formulation without incurring additional computational overhead.

\section{Preliminaries}
\label{section:prelim}

\subsection{Contrastive Learning}
CL has emerged as a powerful paradigm in unsupervised and self-supervised learning, aiming to learn representations by distinguishing positive and negative pairs. Positive pair refers to semantically similar pair whereas negative pair to dissimilar ones. The core idea is to map positive samples closer while pushing negative samples apart in the embedding space. Given a positive pair and a set of negative samples, InfoNCE loss encourages the model to assign high similarity to the positive pair relative to the negatives. The loss can be formulated as follows:

\begin{equation}
    L^{\mathrm{cos}}_{i,\text{InfoNCE}} = -\log \frac
        { e^{(\mathrm{sim}(z_i,z^+_{i}) / \tau)} }
        { e^{(\mathrm{sim}(z_i,z^+_{i}) / \tau)}+\sum\limits_{j\neq i}^{K}e^{(\mathrm{sim}(z_i,z^{-}_{j}) / \tau)} } ,
  \label{eq:cl_loss}
\end{equation}
where $z^+_i$ is the embedding of the positive pair respect to $z_i$ and $z^-_j$s are embeddings of $K-1$ negative pairs. $\mathrm{sim}(\cdot, \cdot)$ is a similarity measure (e.g., dot product) and $\tau$ is a temperature parameter.

\subsection{von Mises-Fisher Distribution}
vMF distribution is defined on the \((p\!-\!1)\)-dimensional unit hypersphere $\mathbb{S}_{p-1}$ and models the probability density of unit vectors with a specified mean direction $\mu$ and concentration parameter $\kappa$. Its probability density function for a unit vector $x \in \mathbb{S}_{p-1}$ is:

\begin{equation}
f_p(x;\mu,\kappa) = C_p(\kappa)\exp(\kappa \mu^T x),
\end{equation}
where \(C_p(\kappa)\) is the normalization constant. Higher $\kappa$ values yield distributions more concentrated around $\mu$, while $\kappa = 0$ corresponds to a uniform distribution on the unit hypersphere.

\subsection{Notation}
\label{subsection:notation}

We denote the batch size, number of views, and feature dimension by \( B \), \( M \), and \( p \), respectively. Each encoder is given \( m \) views, where \( m \) indicates the number of augmented views per encoder, resulting in a total of \( M = 2m \) views.

Let \( x \) be an image data, and \( x^l \) is denoted by its \( l \)-th augmented view. Given an encoder \( f_\theta(\cdot) \), the corresponding feature on the unit hypersphere is computed as \( z^l = f_\theta(x^l) / \| f_\theta(x^l) \| \), $l=1,\cdots,m$. In CL, there are $2 \cdot B$ features $z_i$, $i=1,\cdots,2B$. $z_i$ and $z_{B+i}$ are positively correlated. In multi-view CL, there are $2m \cdot B$ features $z_i^l$, $i=1,\cdots, B$ and $l=1,\cdots,m$. $\{z_i^l\}_{l=1}^m$ and $\{z_{B+i}^l\}_{l=1}^m$ are positively correlated.

The Kullback–Leibler (KL) divergence is written as \( D_{\mathrm{KL}}(\cdot \| \cdot) \), and we denote a vMF distribution by the calligraphic symbol \( \mathcal{D} \).

\section{Divergence-based similarity function}
\label{section:divergence-based_similarity_function}

In this section, we describe DSF in detail and demonstrate its relationship to cosine similarity, as well as its inherent temperature-free property.


\subsection{Estimation of vMF Distribution}
\label{subsection:estimate_vmf_distribution}

First, we describe how to construct vMF distributions from multiple augmented views of an image.

Given data \( x \), we sample \( M \) augmented views and divide them into two groups of \( m \) samples each. These are passed through the encoder to obtain unit-normalized feature vectors. Each group is then used to estimate a vMF distribution, yielding two vMFs per a data.

Let \( z^1, \dots, z^m \) be the feature vectors from one group. The mean direction \( \mu \) and concentration \( \kappa \) are estimated as:
\begin{equation}
    \label{eq:estimation}
    \mu = \frac{\bar{z}}{\bar{R}}, \quad \kappa = A_p^{-1}(\bar{R}),
\end{equation}
where
\begin{equation}
    \bar{z} = \frac{1}{m} \sum\limits_{l=1}^{m} z^l, \quad \bar{R} = \|\bar{z}\|.
\end{equation}
As \( A_p^{-1} \) is intractable, we use the approximation from \cite{banerjee2005clustering}:
\begin{equation}
    \label{eq:kappa_approximation}
    \kappa = \frac{\bar{R}(p-\bar{R}^2)}{1-\bar{R}^2}.
\end{equation}

Let $g_{\text{est}}$ be the estimation function mapping features to vMF parameters:
\[
g_{\text{est}}(z^1, \dots, z^m) = (\mu, \kappa),
\]
and define the resulting vMF distribution as:
\begin{equation}
    \mathcal{D}\left(g_{\text{est}}(z^1, \dots, z^m)\right) = \mathcal{D}(\mu, \kappa).
\end{equation}
For data $i$, we denote:
\begin{equation}
    \mathcal{D}\left(g_{\text{est}}(z_i^1, \dots, z_i^m)\right) =  \mathcal{D}(\mu_i, \kappa_i) = \mathcal{D}_i.
\end{equation}

\subsection{Stabilization of the Concentration Parameter $\kappa$}
\label{subsection:stabilization_of_kappa}

In this section, we describe a stabilization strategy for the concentration parameter \( \kappa \), which is critical for preventing numerical issues during training.

As training progresses, \( \kappa \) may grow excessively due to:

\begin{itemize}
    \item \textbf{High feature dimensionality:} Larger feature dimensions \( p \) result in sharper increase in \( \kappa \), especially with high-dimensional embeddings (e.g., \( D = 128 \)). The problem is further exacerbated when larger models are used.
    \item \textbf{Feature concentration:} As representations improve, features become more aligned and \( \bar{R} \) approaches 1, further sharp increasing \( \kappa \).
\end{itemize}

Though theoretically valid, such growth causes numerical instability in divergence computation. To mitigate this, we adopt two strategies:

\begin{enumerate}
    \item \textbf{Dimension-based normalization:} Normalize \( \kappa \) by the feature dimension \( p \), since \( \kappa \) grows roughly linearly with respect to \( p \) when \( \bar{R} \) is fixed (see Eq.~\ref{eq:kappa_approximation}).
    \item \textbf{Scaling mean resultant length:} Prevent \( \bar{R} \) from approaching 1 by applying a scaling factor \( \lambda_{\bar{R}} = 0.95 \):
    \begin{equation}
        \bar{R} \longrightarrow \lambda_{\bar{R}} \cdot \bar{R}
    \end{equation}
\end{enumerate}

These techniques are applied after estimating the vMF parameters, and they help stabilize training and prevent divergence-related issues in high-dimensional settings. Further details are provided in Appendix~\ref{appendix:stabilize_kappa}, Figure~\ref{fig:stabilize_kappa}.

\begin{table*}[t]
\centering
\begin{tabular}{lllcccc}
\toprule
\textbf{Dataset}      & \textbf{Base Model} & \textbf{Method}       & \textbf{Epoch} & \textbf{kNN (↑)} & \textbf{Linear Evaluation} (↑) & \textbf{mCE* (↓)} \\ \midrule
ImageNet-100 & MoCo v3    & -            & 300                       & 74.08                    & 83.42             &  36.67            \\
             & MoCo v3    & Loss\_avg    & 100                       & 76.62                    & 84.06             &  38.27             \\
             & MoCo v3    & Feature\_avg & 100                       & 76.94                    & 83.82             &  39.42             \\
             & MoCo v3    & DSF          & 100                       & \textbf{79.00}           & \textbf{85.80}    & \textbf{33.15}    \\ \midrule
CIFAR-10     & MoCo       & -            & 800                       & 88.44                    & 88.10             &  19.03       \\
             & MoCo       & Loss\_avg    & 250                       & 88.28                    & 88.58             &  19.90       \\
             & MoCo       & Featrue\_avg & 250                       & 87.47                    & 88.02             &  19.80       \\
             & MoCo       & DSF          & 250                       & \textbf{90.04}           & \textbf{91.21}    &  \textbf{16.20}   \\ \bottomrule
\end{tabular}
\caption{\textbf{Comparison of kNN, Linear Evaluation and mCE Results Under Equivalent Resource Settings.} All experiments are conducted under the same GPU memory and time constraints. The number of views is fixed at \( M = 2m = 8 \). The mean Corruption Error(mCE) was evaluated on CIFAR-10-C and ImageNet-100-C, with all models trained solely on clean(uncorrupted) data. }
\label{tab:knn-linear-corruption-results}
\end{table*}

\subsection{Divergence-based similarity function}
\label{subsection:divergence_similairty_function}

\subsubsection{Similarity function}
Similarity function $\text{sim}(\cdot,\cdot)$ is given by $\text{sim}(\cdot,\cdot) : \mathcal{X} \times \mathcal{X} \rightarrow \mathbb{R}$. For example, the cosine similarity function is given as $\text{sim}_{\text{cos}}(\cdot,\cdot) : \mathbb{S}_{p-1} \times \mathbb{S}_{p-1} \rightarrow \mathbb{R}$, $\text{sim}_{\text{cos}}(x,y) = x^{T}y$. Accordingly, we define the DSF similarity function to handle multi-view features as $\text{sim}_{\text{div}}(\cdot,\cdot) : \mathbb{S}_{p-1}^m \times \mathbb{S}_{p-1}^{m} \rightarrow \mathbb{R}$. More precisely,

\begin{equation}
\begin{split}
    &\text{sim}_{\text{div}}(\mathbb{X}, \mathbb{Y}) = -D_{\mathrm{KL}} \Bigl( \mathcal{D}\bigl(g_{\text{est}}(\mathbb{X})\bigr)\bigm\Vert \mathcal{D}\bigl(g_{\text{est}}(\mathbb{Y})\bigr) \Bigr) .\\
\end{split}
\end{equation}

\subsubsection{KL Divergence of vMF Distribution}
Let $\mathcal{D}_i = \mathcal{D}(\mu_i,\kappa_i)$ and $\mathcal{D}_j = \mathcal{D}(\mu_j,\kappa_j)$ be any two vMF distributions. The KL divergence value between $\mathcal{D}_i$ and $\mathcal{D}_j$ is expressed as the sum of three terms:

\begin{equation}
\begin{split}
    \label{eq:kld}
    &D_{\mathrm{KL}}(\mathcal{D}_i || \mathcal{D}_j) \\
    &= \log \frac{f_p(A_p(\kappa_i)\mu_i; \mu_i, \kappa_i)}{f_p(A_p(\kappa_i)\mu_i; \mu_j, \kappa_j)} \\
    &= \left( \frac{p}{2} - 1 \right) \log \frac{\kappa_i}{\kappa_j} 
    + \log \frac{I_{\frac{p}{2}-1}(\kappa_j)}{I_{\frac{p}{2}-1}(\kappa_i)} \\
    & \qquad \qquad \qquad + A_p(\kappa_i) \cdot \left( \kappa_i - \kappa_j \cdot \mu_i^T \mu_j \right) .\\
    \\
\end{split}
\end{equation}

Figure~\ref{fig:three_terms} in the Appendix illustrates the behavior of each term as well as the overall KL divergence values. Detailed explanation and evaluation of $A_p(\cdot)$ and $I_p(\cdot)$ are provided in Appendix~\ref{appendix:further_explanation}.

The KL divergence ranges from 0 to infinity. As the KL divergence approaches zero, the two distributions become similar, indicating that $\mu_i \approx \mu_j$ and $\kappa_i \approx \kappa_j$. Conversely, a large KL divergence implies that the two distributions are highly dissimilar. Therefore, we define the DSF as the negative KL divergence. We use following expression for notational convenience throughout the remainder of the paper:

\begin{equation}
    \text{sim}_{\text{div}}(\mathcal{D}_i, \mathcal{D}_j) = - D_{\mathrm{KL}}(\mathcal{D}_i || \mathcal{D}_j)
\end{equation}
Now, we define the loss function with DSF as:
\begin{equation}
L_{i,\text{InfoNCE}}^{\text{div}} = 
-\log\left(
\frac{
    e^{\text{sim}_{\text{div}}(\mathcal{D}_i,\mathcal{D}_i^+)}
}{
    e^{\text{sim}_{\text{div}}(\mathcal{D}_i,\mathcal{D}_i^+)} 
    + \sum\limits_{j \neq i}^{K} e^{\text{sim}_{\text{div}}(\mathcal{D}_i,\mathcal{D}_j)}
}
\right)
\label{eq:dsf_loss}
\end{equation}

Here, \( \mathcal{D}_i^+ \) denotes the positive distribution paired with \( \mathcal{D}_i \). The negative distributions are denoted by \( \mathcal{D}_j \), where \( K \) represents the number of negative distributions. Note that the \( K \) may vary depending on the training framework.

\subsection{Analysis : Relation with Cosine Similarity Function}
\label{subsection:analysis}

Next, we establish the connection between the conventional cosine similarity function and DSF.

\begin{theorem}[Relation between Divergence-based and Cosine Similarities]
\label{theorem:relation_of_similarity_function}
    If $m=1$ and all concentration parameters $\kappa$ have the same value $\kappa^*$  and $A_p(\kappa^*)\cdot\kappa^* = 1/\tau$ then,
        \begin{equation}
            \mathrm{sim}_{\mathrm{div}}(\mathcal{D}(z_i), \mathcal{D}(z_j)) =  \mathrm{sim}_{\mathrm{cos}}(z_i, z_j) / \tau + constant
        \end{equation}
    
    Furthermore, InfoNCE Loss with $\mathrm{sim}_{\mathrm{div}}$ becomes equivalent to that using $\mathrm{sim}_{\mathrm{cos}}$ one:
    \begin{equation}
        L^{\mathrm{div}}_{\mathrm{InfoNCE}} = L^{\mathrm{cos}}_{\mathrm{InfoNCE}}
    \end{equation}
\end{theorem}

\begin{proof}
    See Appendix~\ref{appendix:proof_relation}.
\end{proof}

Theorem \ref{theorem:relation_of_similarity_function} states that when we have two views and $\kappa$ is set under certain conditions, DSF becomes cosine similarity with some constant value. Furthermore, this leads to the fact that InfoNCE losses become equivalent. 

We next provide an explanation of why cosine similarity requires a temperature scaling term for effective optimization, while DSF inherently eliminates the need for such adjustment.

\begin{proposition}
    Let \( K \) be the number of negative pairs, and assume all negative similarities have the same value.
    Let \( s^+ \) and \( s^- \) denote the positive and negative similarity, respectively.
    Then, the InfoNCE loss becomes:
    
    \begin{equation}
    \begin{split}
        L_{\mathrm{InfoNCE}}
        &= -\log \frac{\exp(s^+ - s^-)}{\exp(s^+ - s^-) + K} \\
        (s^+ - s^-) &\in 
         \left\{
         \begin{matrix} 
         [-\frac{2}{\tau},\frac{2}{\tau}]  & \text{if}  \; \mathrm{sim} = \mathrm{sim}_{\mathrm{cos}} / \tau \\
         (-\infty,\infty) & \text{if} \; \mathrm{sim} = \mathrm{sim}_{\mathrm{div}} 
         \end{matrix}
         \right.
    \end{split}
    \end{equation}

    \label{proposition}

    In the optimal case for cosine similarity, we have \( s^+ = +1 \) and \( s^- = -1 \), resulting in:
    
    \begin{equation}
    \begin{split}
        L_{\text{InfoNCE}}
        &= -\log \frac{\exp(2/\tau)}{\exp(2/\tau) + K}\\
        &= 6.3196, \quad \text{if} \; \tau=1.0, \, K=4096 \\
    \end{split}
    \end{equation}
\end{proposition}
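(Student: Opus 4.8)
The plan is to establish the two claims of the Proposition separately: first the closed form of the InfoNCE loss under the ``all negatives equal'' assumption, then the ranges of $s^+ - s^-$ for each similarity function, and finally the numerical table as a direct substitution.

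For the closed form, I would start from Eq.~\ref{eq:loss} (or its cosine analogue), substitute $\mathrm{sim}(\mathcal{D}_i,\mathcal{D}_i^+) = s^+$ and $\mathrm{sim}(\mathcal{D}_i,\mathcal{D}_j) = s^-$ for all $j \neq i$, so the denominator sum becomes $K e^{s^-}$. Then factor $e^{s^-}$ out of numerator and denominator inside the logarithm; the $e^{s^-}$ cancels in the ratio, leaving $-\log\big(e^{s^+ - s^-} / (e^{s^+ - s^-} + K)\big)$. This is a one-line manipulation and should be presented as such.

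For the ranges, the cosine case is immediate: since $z_i, z_j$ lie on the unit hypersphere, $\mathrm{sim}_{\mathrm{cos}}(z_i,z_j) = z_i^T z_j \in [-1,1]$, so after temperature scaling each similarity lies in $[-1/\tau, 1/\tau]$ and hence $s^+ - s^- \in [-2/\tau, 2/\tau]$. For the divergence case, I would argue that the KL divergence between two vMF distributions is unbounded above (it tends to $+\infty$ as, e.g., $\mu_i^T\mu_j \to -1$ with $\kappa$ fixed, or as the concentration parameters diverge; one can read this off the three-term expression in Eq.~\ref{eq:kld}) while it is bounded below by $0$. Hence $s^+ = -D_{\mathrm{KL}}(\mathcal{D}_i\|\mathcal{D}_i^+) \in (-\infty, 0]$ and likewise for $s^-$, so their difference ranges over all of $(-\infty,\infty)$. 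The numerical block is then just plugging $s^+ = 1$, $s^- = -1$, $K = 4096$ and the listed $\tau$ values into the closed form, which I would note without reproducing the arithmetic.

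The main obstacle is the unboundedness claim for $\mathrm{sim}_{\mathrm{div}}$: I need to be slightly careful, because in the actual algorithm $\kappa$ is stabilized (dimension normalization and the $\lambda_{\bar R}$ scaling of Section~\ref{subsection:stabilization_of_kappa}), so strictly speaking the attainable range is an interval determined by those clamps rather than the full real line. The honest statement is that, in the absence of such clipping, $D_{\mathrm{KL}}$ between vMF distributions has no finite upper bound, which is what makes the temperature term unnecessary — the loss can already be driven arbitrarily low — and I would frame the $(-\infty,\infty)$ entry as this ``unclipped'' reachable range, citing Eq.~\ref{eq:kld} to exhibit an explicit divergent family (e.g.\ send $\mu_i^T\mu_j \to -1$). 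Everything else is routine.
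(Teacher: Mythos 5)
Your proposal is correct and follows exactly the route the paper intends: the paper offers no separate proof for this Proposition, treating it as the direct computation you spell out (factor $e^{s^-}$ out of the softmax, bound $z_i^T z_j/\tau$ on the sphere, note the unboundedness of the vMF KL divergence from Eq.~\ref{eq:kld}, then substitute the numbers). Your added caveat that the stabilized $\kappa$ of Section~\ref{subsection:stabilization_of_kappa} clips the practically attainable range of $\mathrm{sim}_{\mathrm{div}}$ is a fair refinement the paper itself does not make, but it does not change the claim.
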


We show (see Proposition~\ref{proposition}) that cosine similarity alone fails to produce appropriate loss values without a temperature term \( \tau \), due to the limited similarity margin \( (s^+ - s^-) \) between positive and negative pairs. The temperature parameter compensates for this by amplifying the margin. As shown in Table~\ref{table:infonce_loss}, InfoNCE loss with cosine similarity produces suboptimal values for all $K$ when \( \tau \) is omitted, highlighting the need for careful tuning. In contrast, DSF achieves appropriate loss values without requiring such tuning. See Figure~\ref{figure:s^+_s^-} for similarity in DSF.

\begin{table}
    \centering
    \begin{tabular}{c|ccc}
    \toprule
    & \multicolumn{3}{c}{$L_{\text{InfoNCE}}$} \\ \midrule
    $\tau$ & K=256  & K=4096  & K=65536  \\ \midrule
    1.0    & 3.573  & 6.319   & 9.090    \\
    0.5    & 1.738  & 4.331   & 7.091    \\
    0.2    & 0.011  & 0.170   & 1.380    \\
    0.1    & 0.000  & 0.000   & 0.0001   \\ \bottomrule
    \end{tabular}
    \caption{InfoNCE loss values under optimal similarity ($s^+=1$, $s^-=-1$) for varying temperature $\tau$ and the number of negative pairs $K$. The number of negatives \( K \) depends on the framework: SimCLR uses \( K = 2B - 2=254 \) when $B=128$, while MoCo sets \( K \) by the memory queue size (4096–65536).}
    \label{table:infonce_loss}
\end{table}

\begin{algorithm}[tb]
\caption{Three Multi-View Methods}
\label{algorithm:multi-view-methods}
\textbf{Input}: Data $x_i \in \mathbb{R}^{C \times H \times W}$, encoders $f_{\theta}$ and $f_{\theta'}$. \\
\textbf{Parameter}: Number of views $M = 2m$. \\
\textbf{Output}: Divergence-based similarity between positive pairs.
\begin{algorithmic}[1]
\setlength{\itemsep}{0.5em}  

\STATE Apply $M=2m$ augmentations to $x_i$, yielding $x_i^1, \dots, x_i^m$ and $x_{B+i}^1, \dots, x_{B+i}^m$
\STATE Extract features: \\
$\displaystyle z_i^l = \frac{f_{\theta}(x_i^l)}{\|f_{\theta}(x_i^l)\|},\,z_{B+i}^l = \frac{f_{\theta'}(x_{B+i}^l)}{\|f_{\theta'}(x_{B+i}^l)\|}\,\text{for } l = 1,\dots,m $

\STATE \quad \textbf{if} multi-view method = \texttt{Loss\_avg}: \\
\qquad $\displaystyle L_{i,\text{Loss\_avg}} = \frac{1}{m^2} \sum_{l=1}^{m} \sum_{l'=1}^{m} L_{i, \text{InfoNCE}}^{\cos}(z_i^l, z_{B+i}^{l'}) $

\STATE \quad \textbf{if} multi-view method = \texttt{Fea\_avg}: \\
\qquad $\displaystyle z_i^{\text{avg}} = \frac{1}{m} \sum_{l=1}^{m} z_i^l, \quad
z_{B+i}^{\text{avg}} = \frac{1}{m} \sum_{l=1}^{m} z_{B+i}^l $ \\
\qquad $\displaystyle L_{i,\text{Fea\_avg}} = L_{i,\text{InfoNCE}}^{\cos}(z_i^{\text{avg}}, z_{B+i}^{\text{avg}}) $

\STATE \quad \textbf{if} multi-view method = \texttt{DSF}: \\
\qquad Estimate vMF distribution parameters as in\\
\qquad Eq.~(3)--(4):

\qquad \quad $\displaystyle (\mu_i, \kappa_i) \leftarrow g_{\text{est}}\left(\{z_i^1, \dots, z_i^m\}\right)$ \\
\qquad \quad $\displaystyle (\mu_{B+i}, \kappa_{B+i}) \leftarrow g_{\text{est}}\left(\{z_{B+i}^1, \dots, z_{B+i}^m\}\right)$

\STATE \qquad Compute divergence-based similarity: \\
\qquad \quad $\displaystyle \mathrm{sim}_{\mathrm{div}}(\mathcal{D}_i, \mathcal{D}_{B+i}) = -D_{\mathrm{KL}}(\mathcal{D}_i \, \| \, \mathcal{D}_{B+i}) $ \\
\qquad \quad $\displaystyle L_{i,\text{DSF}} = L_{i,\text{InfoNCE}}^{\mathrm{div}}\left(\mathcal{D}(\mu_i, \kappa_i), \mathcal{D}(\mu_{B+i}, \kappa_{B+i})\right) $

\end{algorithmic}
\end{algorithm}

\section{Experiments}
\label{section:experiments}

\subsection{Experimental Setup}

\paragraph{Datasets.}
We conduct experiments on two standard benchmarks: CIFAR-10~\cite{krizhevsky2009learning} and ImageNet-100, a 100-class subset of ImageNet~\cite{deng2009imagenet}. These datasets allow us to evaluate the proposed method on both small- and large-scale visual recognition tasks.

We investigated transfer learning performance on the Food-101 dataset~\cite{bossard2014food}, CIFAR-10
and CIFAR-100~\cite{krizhevsky2009learning}, FGVC Aircraft~\cite{maji2013fine}, the Describable Textures Dataset (DTD)~\cite{cimpoi2014describing} and Oxford-IIIT Pets~\cite{parkhi2012cats}. 

We adopt CIFAR-10-C~\cite{hendrycks2019benchmarkingneuralnetworkrobustness} and ImageNet-100-C as corruption benchmarks to evaluate the robustness of the learned representations. The ImageNet-100-C dataset is constructed by selecting the 100 classes used in ImageNet-100 from the original ImageNet-C benchmark. For both datasets, we use only the corrupted samples with severity level 1 for evaluation.

\paragraph{Model Architectures.}
We follow the MoCo v2~\cite{chen2020improved} framework, using ResNet-18 as the encoder for CIFAR-10, with a 128-dimensional projection head. For ImageNet-100, we adopt MoCo v3~\cite{chen2021empirical} with a ViT-S/16 backbone to evaluate scalability across both CNN and transformer architectures.

\paragraph{Data Augmentation.}
For data augmentation, we apply a combination of RandAugment~\cite{cubuk2020randaugment}, ColorJitter, Random Grayscale, Gaussian Blur, and Horizontal Flip, following common practice in CL setups.
\subsection{Evaluations}
We evaluate the model using two standard protocols: k-Nearest Neighbors (kNN) and Linear Evaluation. The kNN evaluation measures similarity preservation in the embedding space, while linear evaluation assesses the representational quality via a trained classifier.

\paragraph{kNN Evaluation.}
Using $k=200$, we conduct kNN evaluation on CIFAR-10 and ImageNet-100 with features from the trained encoder. DSF method achieves 79.00\% on ImageNet-100 and 90.04\% on CIFAR-10, outperforming MoCo v3 baselines (74.08\% and 88.44\%, respectively) and other multi-view methods. See Table~\ref{tab:knn-linear-corruption-results}.

\paragraph{Linear Evaluation.}
We freeze the pretrained encoder and train a linear classifier on top. DSF achieves 85.80\% on ImageNet-100 and 91.21\% on CIFAR-10, surpassing MoCo v3 results of 83.42\% and 88.10\% and also other multi-view methods. These results indicate that DSF produces more discriminative and transferable representations. See Table~\ref{tab:knn-linear-corruption-results}.

\paragraph{Transfer Learning.}
Transfer learning is one of the ultimate goals of SSL. We evaluate the transfer performance on five natural image datasets under the fine-tuning setting, where all model weights are updated without freezing. More details are in Appendix~\ref{appendix:transfer_learning}

\paragraph{Robustness under Distribution Shift.}
To assess the robustness of the learned representations under distribution shift, we evaluate the models on common corruption benchmarks: CIFAR-10-C and ImageNet-100-C, following the protocol of~\cite{hendrycks2019benchmarkingneuralnetworkrobustness}. We use the mean Corruption Error (mCE) as the evaluation metric, measured using the model parameters obtained from linear evaluation on CIFAR-10 and ImageNet-100, respectively. No additional fine-tuning is performed during this evaluation.

As shown in Table~\ref{tab:knn-linear-corruption-results}, DSF consistently outperforms both the baseline and other multi-view contrastive learning methods, achieving an mCE of 16.20\% on CIFAR-10-C and 33.15\% on ImageNet-100-C. These results indicate that the joint structure–capturing capability of DSF also enhances the model’s robustness to input corruptions.


\begin{figure}[tb]
    \centering
    \begin{subfigure}[t]{1.0\linewidth}
        \centering
        \includegraphics[width=\linewidth]{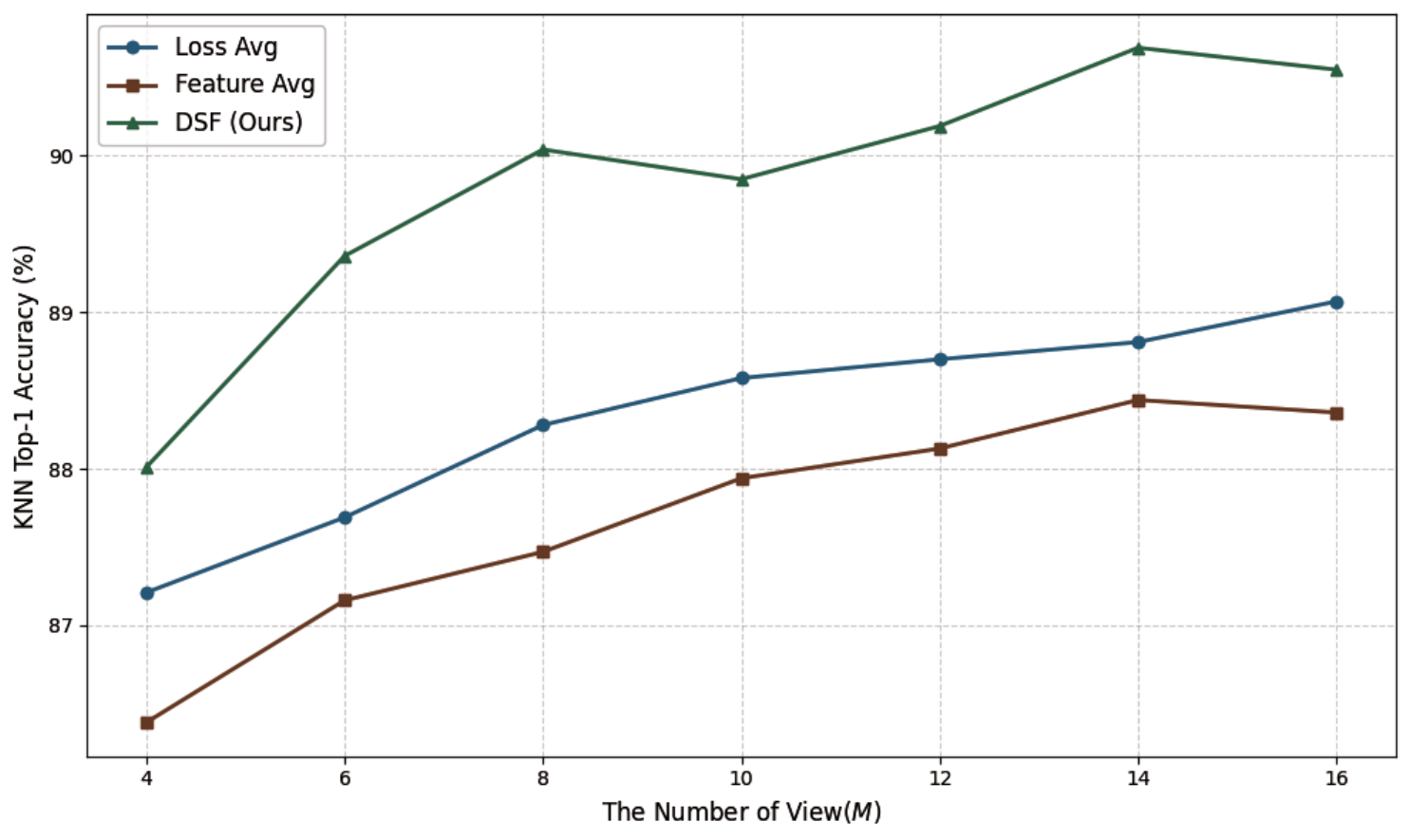}
        \caption{kNN Top-1 Accuracy with respect to the number of view.}
        \label{figure:number_of_view_kNN}
    \end{subfigure}
    
    \vspace{0.5em}

    \begin{subfigure}[t]{1.0\linewidth}
        \centering
        \includegraphics[width=\linewidth]{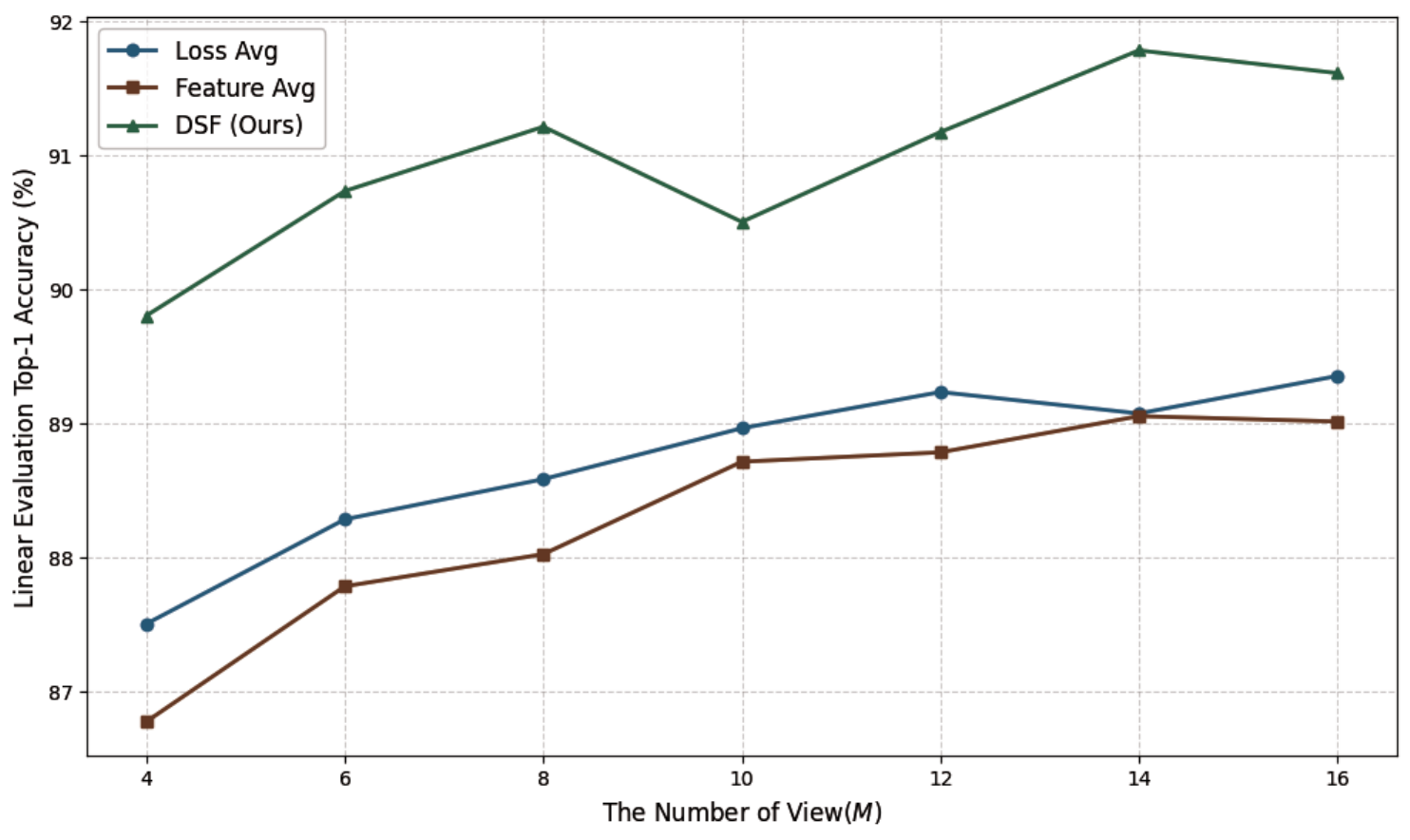}
        \caption{Linear Evaluation Top-1 Accuracy with respect to the number of view.}
        \label{figure:number_of_view_linear_evaluation}
    \end{subfigure}
    
    \caption{Comparison of kNN and linear evaluation accuracies according to the number of views in three multi-view learning methods.}

    \label{figure:number_of_view}
\end{figure}

\begin{table*}[t]
\centering
\begin{tabular}{llcccccccc}
\toprule
\multicolumn{1}{c}{\multirow{2}{*}{\textbf{Model}}} & \multicolumn{1}{c}{\multirow{2}{*}{\textbf{Method}}} & \multicolumn{6}{c}{\textbf{Dataset (Top-1 Accuracy \%)}}                                              \\ \cmidrule(l){3-8} 
\multicolumn{1}{c}{} & \multicolumn{1}{c}{} & CIFAR10              & CIFAR100           & Pet               & Aircraft          & DTD               & Average \\ \midrule
MoCo v3              & -                    & 96.36                & 80.80              & \underline{84.11} & \underline{47.58} & \underline{59.68} & \underline{73.70}   \\
MoCo v3              & Loss\_avg            & \underline{96.76}    & \underline{81.55}  & 83.11             & 47.17             & 58.67             & 73.45   \\
MoCo v3              & Feature\_avg         & 96.72                & 81.06              & 82.80             & 46.45             & 57.66             & 72.93   \\
MoCo v3              & DSF                  & \textbf{97.11}       & \textbf{81.76}     & \textbf{84.17}    & \textbf{49.49}    & \textbf{60.59}    & \textbf{74.62}   \\ \bottomrule
\end{tabular}
\caption{\textbf{Comparison of transfer learning performance.} Top-1 accuracy (\%) on five downstream datasets using MoCo v3 representations pre-trained on ImageNet-100. Bold indicates the best performance, and underline indicates the second best.}
\label{table:transfer_learning}
\end{table*}

\subsection{Same Resource Comparison}

DSF samples multiple views per image, which may increase computational cost. To ensure fair comparison, we fix the product of batch size and number of views (\( B \times M \)) across all methods, maintaining equal GPU memory usage. For example, in ImageNet-100 experiments, MoCo v3 uses \(512 \times 2\), while DSF and other multi-view baselines use \(128 \times 8\), preserving memory usage. In CIFAR-10 experiments, MoCo uses \(1024 \times 2\), while DSF and other multi-view baselines adopt \(256 \times 8\).


To align training time, we reduce epochs for multi-view methods proportionally. This keeps runtime differences within 1 hour for ImageNet-100 (under 7\%) and 30 minutes for CIFAR-10 (under 10\%), ensuring that performance gains arise from DSF itself rather than extra computation.

\subsection{More Resource Comparison}

\subsubsection{More Time Cost: Faster Convergence with Divergence Similarity}

Self-supervised learning often requires longer training to converge, due to the difficulty of learning meaningful representations without labels~\cite{chen2020simple, wang2021solving, wang2021dense}. To investigate convergence behavior, we compare model accuracies over elapsed training time (Figure~\ref{fig:training_time_comparison}). While MoCo requires 55 hours for 8000 epochs, DSF achieves similar accuracy in just 11 hours (400 epochs), using only 20\% of the time. Compared to \textit{Loss\_avg} and \textit{Feature\_avg}, DSF also reaches similar accuracy with half the epochs and maintains a consistently higher performance throughout training. These results highlight the efficiency of DSF in optimizing self-supervised objectives.

\subsubsection{More GPU Memory: Consistent Superiority Across View Counts}

In Figure~\ref{figure:number_of_view}, we examine how the number of augmented views affects representation quality, as increasing views generally improves performance~\cite{tian2020contrastive, shidani2024poly, Fastsiam}. In DSF, more views lead to more accurate estimation of the vMF parameters \( \mu \) and \( \kappa \), due to reduced variance. As shown in Figure~\ref{figure:number_of_view}, DSF consistently outperforms other multi-view methods across different numbers of views, clearly demonstrating the superiority of DSF in multi-view representation learning.



\subsection{Temperature Free}
While prior works~\cite{zhang2021temperature, zhang2022dual, kukleva2023temperature} emphasize the importance of the temperature parameter \( \tau \) in CL, DSF does not require it (see Proposition~\ref{proposition}) and even shows degraded performance when \( \tau \) is introduced. As shown in Table~\ref{table:temperature}, the best results are consistently achieved when \( \tau = 1 \), indicating that DSF is inherently stable without temperature tuning.

\begin{table}
\centering
\begin{tabular}{lccccccccccc}
\toprule
$\tau$       & 0.1   & 0.9   & 1              & 1.1   & 5     \\ \midrule
kNN               & 27.71 & 89.83 & \textbf{90.04} & 89.80 & 38.44 \\
Lin. Eval. & 22.02 & 91.18 & \textbf{91.21} & 91.09 & 56.33 \\ \bottomrule
\end{tabular}
\caption{\textbf{Temperature Experiment}. Training MoCo with DSF under varying temperatures on CIFAR-10.}
\label{table:temperature}
\end{table}


\begin{figure}[t]
    \centering
    \includegraphics[width=1.0\linewidth]{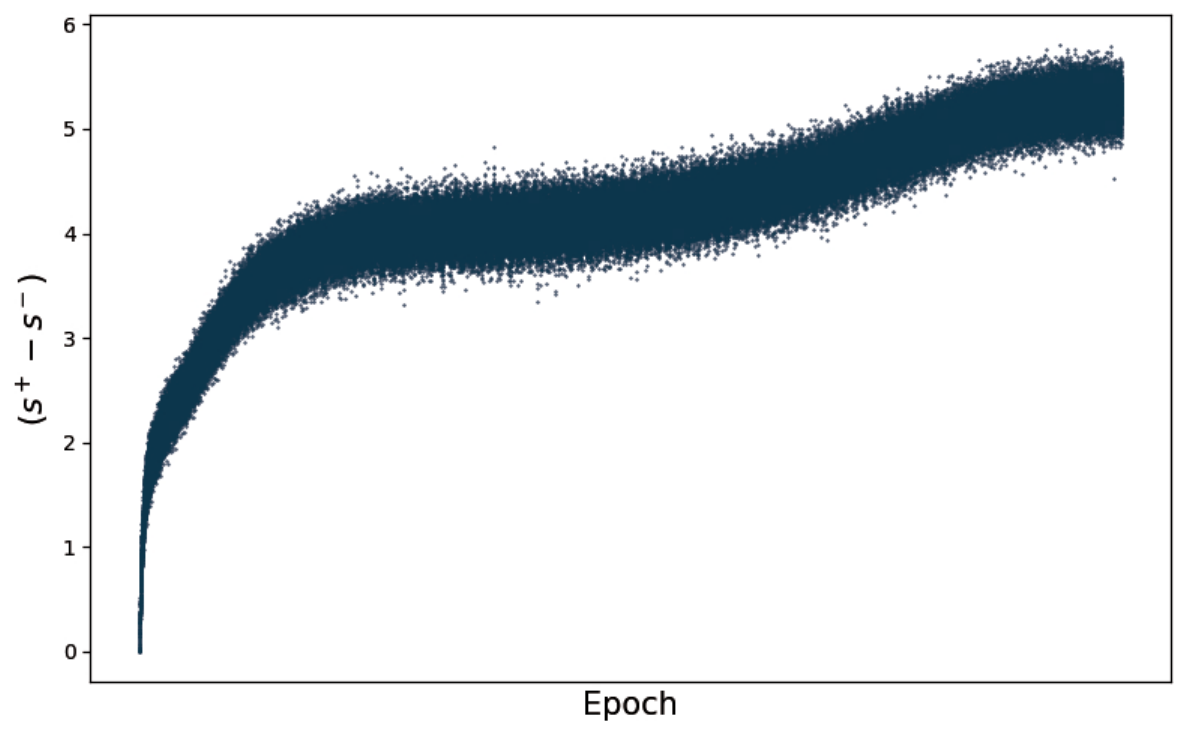}
    \caption{Training dynamics of the similarity margin $(s^+ - s^-)$. Values are averaged over mini-batches throughout training. This plot shows the similarity margin of MoCo v3 with DSF ($128 \times 8$) configuration on ImageNet-100.}
    \label{figure:s^+_s^-}
\end{figure}

\section{Conclusion}
\label{section:conclusion}
In this work, we propose a novel similarity function, named the Divergence-based Similarity Function (DSF), which captures joint relationships among multiple features. Our method demonstrates strong performance under both equal and increased resource settings, surpassing baseline and other multi-view methods. In addition, it produces highly transferable and robust representations, as evidenced by its effectiveness in transfer learning and resilience to data corruptions. We also establish its connection to cosine similarity and demonstrate that DSF performs best with a fixed temperature of 1, eliminating the need for tuning.

DSF is defined using the von Mises–Fisher distribution and Kullback–Leibler divergence. While effective, other $f$-divergences such as Rényi, $\chi^\alpha$, and Jensen–Shannon divergence remain unexplored, offering promising directions for future research.

Beyond these avenues, a natural extension is a deeper theoretical study of DSF. In particular, examining its generalization behavior and convergence in high-dimensional regimes could provide stronger foundations for its empirical success. Such analysis may also clarify why DSF exhibits superior robustness and transferability compared to cosine similarity.

\section*{Acknowledgements}
This work is supported by the National Research Foundation of Korea(NRF) grant funded by the Korea government(MSIT) (RS-2024-00421203, RS-2024-00406127, NRF-2016K2A9A2A13003815) and 
IITP(2021-0-00077). We would also like to thank Imseong Park, Hyunsung Bae, Jongyeon Lee and Jaesuk Kim,  for their valuable feedback on an earlier version of this manuscript.

\bibliography{aaai2026}
\clearpage
\appendix

\section{Appendix: Proof of Theorem~\ref{theorem:relation_of_similarity_function}}
\label{appendix:proof_relation}

We prove that the divergence-based similarity function becomes equivalent to the cosine similarity function in InfoNCE loss under specific conditions.

Assume $m=1$, $\kappa_1 = \kappa_2 = \kappa$, and $A_p(\kappa) \cdot \kappa = 1/\tau$. Then, the divergence-based similarity simplifies as:
\begin{equation}
    \text{sim}_{\text{div}}(\mathcal{D}(z_i), \mathcal{D}(z_j)) = \frac{1}{\tau} \text{sim}_{\text{cos}}(z_i, z_j) + C,
\end{equation}
where $C$ is a constant independent of $(z_i, z_j)$ pair.

Substituting this into the InfoNCE loss:
\begin{equation}
\begin{split}
    &L_{i,\text{InfoNCE}}^{\text{div}} \\
    &= -\log\left(
    \frac{
        e^{\text{sim}_{\text{div}}(\mathcal{D}_i,\mathcal{D}_i^+)}
    }{
        e^{\text{sim}_{\text{div}}(\mathcal{D}_i,\mathcal{D}_i^+)} 
        + \sum\limits_{j \neq i}^{K} e^{\text{sim}_{\text{div}}(\mathcal{D}_i,\mathcal{D}_j)}
    }
    \right) \\
    &= -\log\left(
    \frac{
        e^{\frac{1}{\tau}\text{sim}_{\text{cos}}(z_i,z_i^+) + C}
    }{
        e^{\frac{1}{\tau}\text{sim}_{\text{cos}}(z_i,z_i^+) + C} 
        + \sum\limits_{j \neq i}^{K} e^{\frac{1}{\tau}\text{sim}_{\text{cos}}(z_i,z_j) + C}
    }
    \right) \\
    &= -\log\left(
    \frac{
        e^{\frac{1}{\tau}\text{sim}_{\text{cos}}(z_i,z_i^+)}
    }{
        e^{\frac{1}{\tau}\text{sim}_{\text{cos}}(z_i,z_i^+)} 
        + \sum\limits_{j \neq i}^{K} e^{\frac{1}{\tau}\text{sim}_{\text{cos}}(z_i,z_j)}
    }
    \right) \\
    &= L_{i,\text{InfoNCE}}^{\text{cos}}.
\end{split}
\end{equation}

\qed

\section{Feature Averaging Also Encodes Only Pairwise Relationships}
\textit{Loss\_avg} is clearly a pairwise-based method. We further demonstrate that \textit{Feature\_avg}, despite aggregating multiple views, still considers only pairwise relationships among features, as can be shown through a straightforward calculation.

\begin{align}
    z_i^{\text{avg}} &= \frac{1}{m} \sum_{l=1}^{m} z_i^l, \quad
    z_{B+i}^{\text{avg}} = \frac{1}{m} \sum_{l=1}^{m} z_{B+i}^l \notag
\end{align}

\begin{align}
    \text{sim}_{\text{cos}}(z_i^{\text{avg}}, z_{B+i}^{\text{avg}}) 
    &= (z_i^{\text{avg}})^T z_{B+i}^{\text{avg}} \notag \\
    &= \left( \frac{1}{m} \sum_{l=1}^{m} z_i^l \right)^T \left( \frac{1}{m} \sum_{l'=1}^{m} z_{B+i}^{l'} \right) \notag \\
    &= \frac{1}{m^2} \sum_{l=1}^{m} \sum_{l'=1}^{m} (z_i^l)^T z_{B+i}^{l'}
\end{align}

As shown above, the final similarity score is an unweighted average over all pairwise cosine similarities between individual view features \( z_i^l \) and \( z_{B+i}^{l'} \). Thus, \textit{Feature\_avg} does not model joint structure beyond pairwise relationships.


\section{Further Explanation of DSF}
\label{appendix:further_explanation}

\subsection{Definitions}
In here, we introduce the definition of $A_p(\cdot)$ and $I_p(\cdot)$ in Eq.~\eqref{eq:estimation} and Eq.~\eqref{eq:kld} respectively. Here, $\Gamma$ denotes the gamma function. $I_p(\cdot)$ is Modified Bessel function of the first kind with order $p$. $A_p(\cdot)$ is a fraction of two Modified Bessel function of the first kind.

\begin{equation}
\begin{split}
    \label{eq:A_I}
    &I_p(\kappa) = \sum_{m=0}^{\infty}\frac{1}{m! \Gamma(m+p+1)}(\frac{\kappa}{2})^{2m+p}, \\
    &A_p(\kappa) = \frac{I_{\frac{p}{2}}(\kappa)}{I_{\frac{p}{2}-1}(\kappa)}
\end{split}
\end{equation}

\subsection{Second Term Calculation}
In Eq.~\eqref{eq:kld}, the second term cannot be evaluated in closed form using the modified Bessel function $I_p(\cdot)$.
Therefore, we adopt an asymptotic approximation as given in~\cite[Eq.~(10.41.3)]{NIST:DLMF}.

\begin{equation}
\begin{split}
    \label{eq:mbf}
    I_v(vz) &\sim \frac{e^{v\eta}}{(2 \pi v)^{1/2}(1+z^2)^{1/4}} \sum_{k=0}^{\infty}\frac{U_k(x)}{v^k} \\
    \eta &= (1+z^2)^{1/2} + \log \frac{z}{1+(1+z^2)^{1/2}} \\
    x &= (1+z^2)^{-1/2} \\
    U_{k+1}(x) &= \frac{1}{2}x^2(1-x^2)U_{k}^{'}(x) + \frac{1}{8}\int_{0}^{x}(1-5t^2)U_k(t)dt \\
\end{split}
\end{equation}

Using this approximation, we compute the second term as follows, truncating the series at $k=5$.

\begin{equation}
\begin{split}
    \label{eq:second term calculation}
    &\log \frac{I_{\frac{p}{2}-1}(\kappa_1)}{I_{\frac{p}{2}-1}(\kappa_0)}
        = \log \frac{I_v(vz_1)}{I_v(vz_0)} \\
    &= v(\eta_1 - \eta_0) + \frac{1}{4}(\frac{1+z_0^2}{1+z_1^2}) + \log         
        \sum_{k=0}^{\infty}\frac{U_k(x_1)}{v^k} \cdot \frac{1}{{\sum_{k=0}^{\infty}\frac{U_k(x_0)}{v^k}}} \\
    & v = \frac{p}{2}-1
\end{split}
\end{equation}

\subsection{Stabilization of the Concentration Parameter $\kappa$}
\label{appendix:stabilize_kappa}

To address numerical instability during training, we use the approximated concentration parameter \( \kappa \) as defined in Eq.~\eqref{eq:kappa_approximation}. However, even this approximation can lead to excessively large values of \( \kappa \) as training progresses. To mitigate this, we propose two stabilization techniques.

Figure~\ref{fig:stabilize_kappa} illustrates the original approximated \( \kappa \) values computed from Eq.~\eqref{eq:kappa_approximation}, alongside the stabilized values obtained after applying our proposed techniques.

\begin{figure}[h!]
    \centering
    \includegraphics[width=1.0\linewidth]{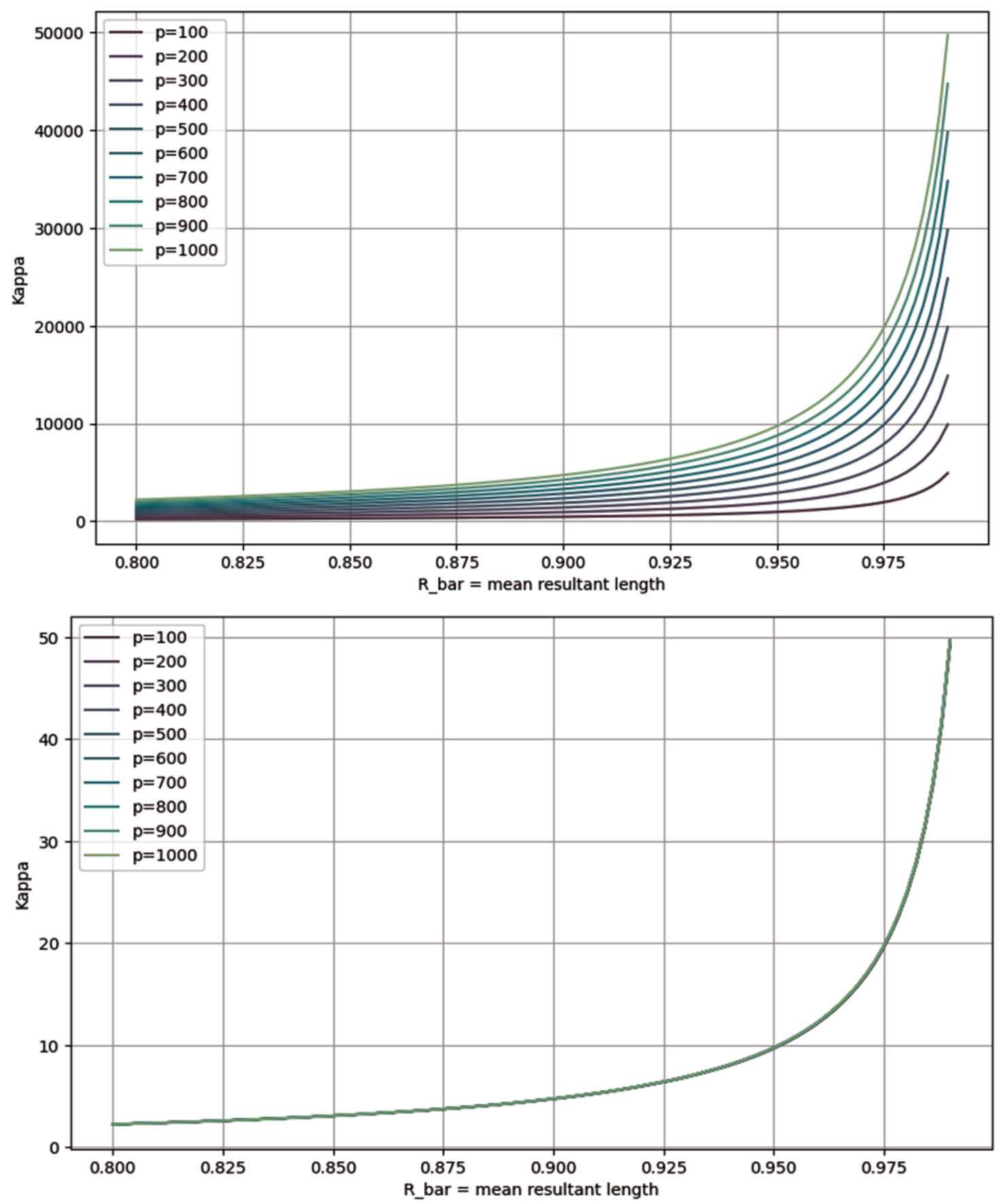}
    \caption{\textbf{Consistency by Kappa Normalization}. $\kappa$ value is calculated as approximated formulation as Eq.~\eqref{eq:kappa_approximation}. $x$-axis is $\bar{R}$ and $y$-axis is calculated $\kappa$. $p$ is feature dimension size. In our backbone ResNet18, feature dimension is 128. \textbf{(upper)} Directly calculated $\kappa$. \textbf{(lower)} $\kappa$ is divided by feature dimension $p$ and calculated with $\bar{R}$ multiplied by $\lambda_{\bar{R}}$. Without these two techniques, $\kappa$ value is going excessively large.}
    \label{fig:stabilize_kappa}
\end{figure}
\section{Experimental Details}

\subsection{Training Details}

Every experiment is conducted with one or four NVIDIA GeForce RTX 3090.

To train on ImageNet-100 dataset, MoCo v3 with ViT model as the encoder from scratch. We use AdamW as our optimizer. The weight decay is 0.1 and the encoder momentum is 0.99 for updating encoder. We choose batch size of 512, learning rate of 0.03. In DSF, a larger learning rate of 0.06 is used. GPU used for overall training is NVIDIA GeForce RTX 3090.

\subsection{Transfer Learning}
We conduct transfer learning experiments following the protocol in~\cite{chen2020simple}. We report top-1 accuracy for CIFAR-10, CIFAR-100 and the Describable Textures Dataset (DTD); and mean per-class accuracy for FGVC Aircraft and Oxford-IIIT Pets.

\subsection{Robustness under Distribution Shift}
\label{appendix:transfer_learning}
We adopt the corruption methodology introduced by ~\cite{hendrycks2019benchmarkingneuralnetworkrobustness}. The ImageNet-100-C dataset is constructed by extracting the 100 selected classes from the original ImageNet-C benchmark. We evaluate the models using the mean Corruption Error (mCE) metric on corrupted images with severity level 1.

The evaluation covers all 18 corruption types provided in the benchmark:
gaussian noise, shot noise, impulse noise, defocus blur, glass blur, motion blur, zoom blur, snow, frost, brightness, contrast, elastic transform, pixelate, jpeg compression, speckle noise, gaussian blur, spatter, and saturate.


\begin{figure}[t]
    \centering
    \begin{subfigure}{\columnwidth}
        \centering
        \includegraphics[width=\columnwidth]{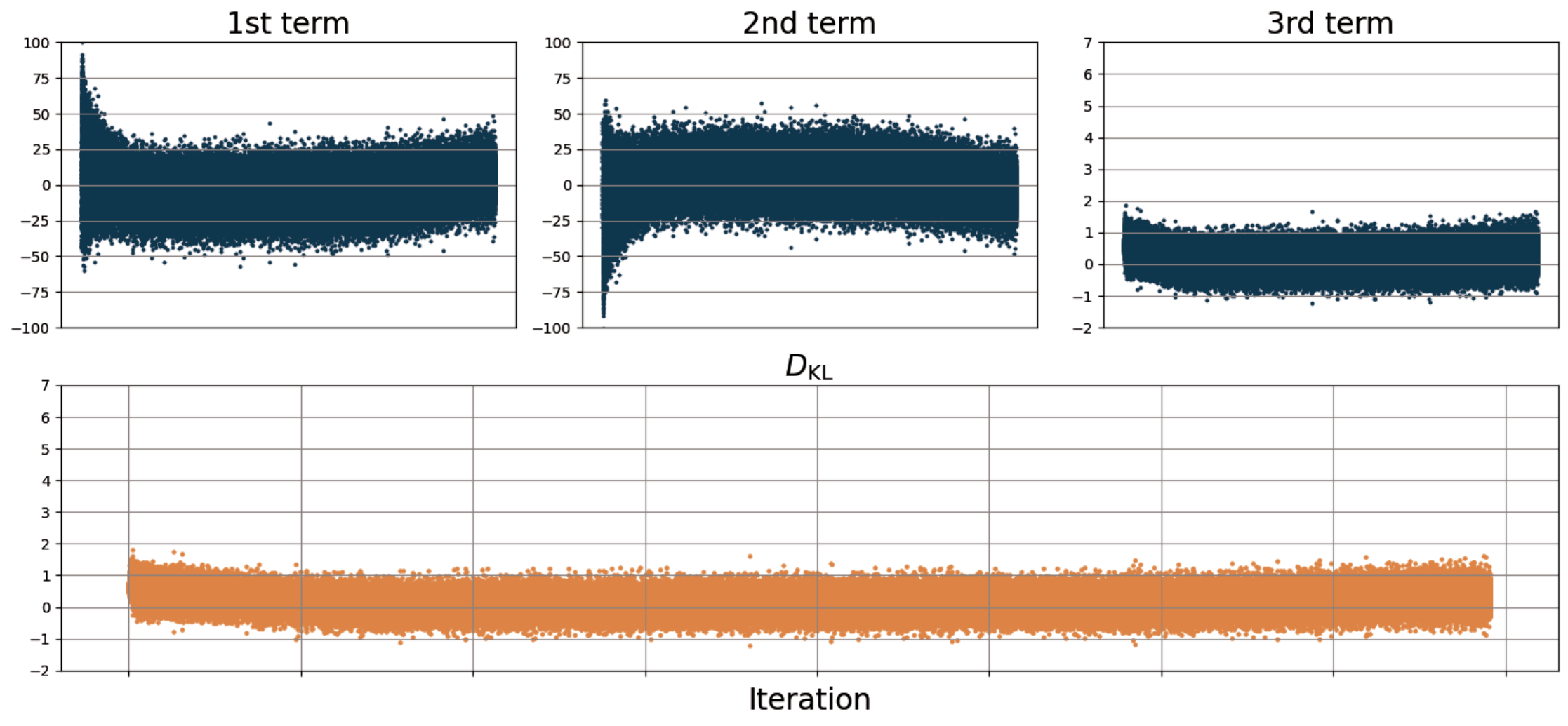}
        \caption{Three terms and KL divergence value of positive pairs.}
    \end{subfigure}
    \vspace{0.5em}
    
    \begin{subfigure}{\columnwidth}
        \centering
        \includegraphics[width=\columnwidth]{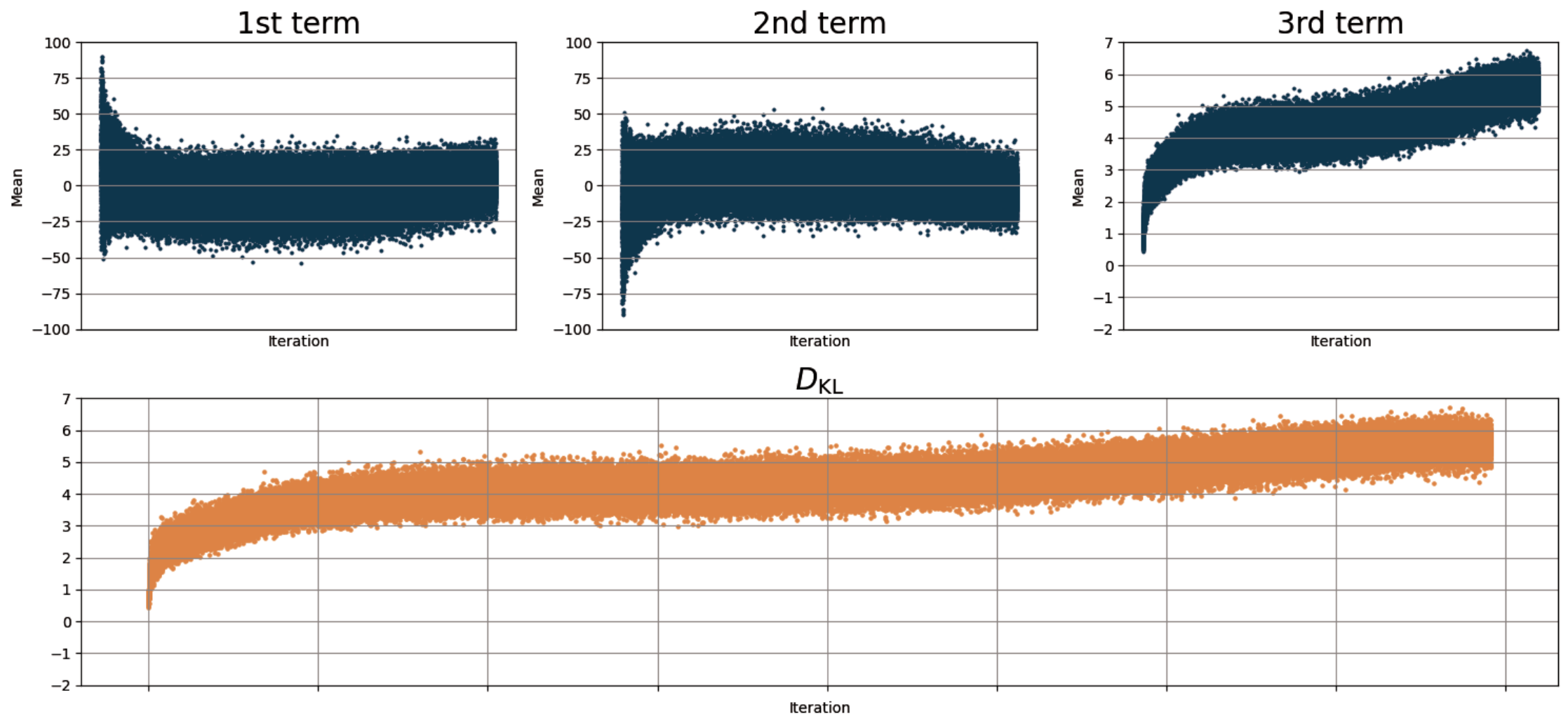}
        \caption{Three terms and KL divergence value of negative pairs.}
    \end{subfigure}
    
    \caption{(Top) Positive pairs. (Bottom) Negative pairs. Positive pairs maintain KL divergence values close to 0, reflecting their similarity, while negative pairs exhibit larger divergence values, reflecting increasing dissimilarity.}
    \label{fig:three_terms}
\end{figure}



\end{document}